
\typeout{IJCAI-19 Instructions for Authors}


\documentclass{article}
\pdfpagewidth=8.5in
\pdfpageheight=11in
\usepackage{ijcai19}

\usepackage{times}
\usepackage{soul}
\usepackage{url}
\usepackage[hidelinks]{hyperref}
\usepackage[utf8]{inputenc}
\usepackage[small]{caption}
\usepackage{graphicx}
\usepackage{amsmath}
\usepackage{amssymb}
\usepackage{booktabs}
\usepackage{algorithm}
\usepackage{algorithmic}
\usepackage{array}
\usepackage{lipsum}
\newtheorem{theorem}{Theorem}

\newtheorem{proof}{Proof}
\urlstyle{same}





\title{Policy Optimization With Penalized Point Probability Distance: An Alternative To Proximal Policy Optimization}

\author{
    Xiangxiang Chu
    \affiliations
    AI Lab, Xiaomi \emails
    chuxiangxiang@xiaomi.com
}
%

\begin{document}

\maketitle

\begin{abstract}
 As the most successful variant and improvement for Trust Region Policy Optimization (TRPO), proximal policy optimization (PPO) has been widely applied across various domains with several advantages: efficient data utilization, easy implementation, and good parallelism. In this paper, a first-order gradient reinforcement learning algorithm called Policy Optimization with Penalized Point Probability Distance (POP3D), which is a lower bound to the square of total variance divergence is proposed as another powerful variant. Firstly, we talk about the shortcomings of several commonly used algorithms, by which our method is partly motivated. Secondly, we address to overcome these shortcomings by applying POP3D. Thirdly, we dive into its mechanism from the perspective of solution manifold. Finally, we make quantitative comparisons among several state-of-the-art algorithms based on common benchmarks. Simulation results show that POP3D is highly competitive compared with PPO. Besides, our code is released in \url{https://github.com/paperwithcode/pop3d}.
  
\end{abstract}

\section{Introduction}

With the development of deep reinforcement learning, lots of impressive results have been produced in a wide range of fields such as playing Atari game \cite{mnih2015human,hessel2017rainbow},  controlling robotics \cite{lillicrap2015continuous}, Go \cite{silver2017mastering}, neural architecture search \cite{tan2018mnasnet,pham2018efficient}. 


The basis of a reinforcement learning algorithm is generalized policy iteration \cite{sutton2018reinforcement}, which states  two essential iterative steps: policy evaluation and improvement. Among various algorithms, policy gradient is an active branch of reinforcement learning whose foundations are  $Policy$ $Gradient$ $Theorem$ and the most classical  algorithm REINFORCEMENT \cite{sutton2018reinforcement}. Since then, handfuls of policy  gradient variants have been proposed, such as  Deep Deterministic Policy Gradient (DDPG) \cite{lillicrap2015continuous}, Asynchronous Advantage Actor Critic (A3C) \cite{2016arXiv160201783M}, Actor Critic using Kronecker-factored Trust Region (ACKTR) \cite{wu2017scalable}, Proximal Policy Optimization (PPO) \cite{2017arXiv170706347S}.

Improving the strategy monotonically had been nontrivial before the trust region policy optimization(TRPO) was proposed \cite{2015arXiv150205477S}. Hessian-free strategy: Fisher vector product is utilized to cut down the computing burden. Specifically, Kullback–Leibler divergence(\textbf{KLD})  acts as a hard constraint in place of objective, because its corresponding coefficient is difficult to set for different problems.  However, TRPO still has several drawbacks: too complicated, inefficient data usage. Quite a lot of  efforts have been devoted to improving TRPO since then and the most distinguished one is PPO.

PPO can be regarded as a  first-order variant of TRPO and  have obvious improvements in several facets. In particular, a pessimistic clipped surrogate objective is proposed where TRPO's hard constraint is replaced  by the clipped action probability ratio. In such a way, it constructs an unconstrained optimization problem so that any first-order stochastic gradient optimizer can be  directly applied. Besides, it's easier to be  implemented, more robust against various problems and  achieves an impressive result on Atari games\cite{brockman2016openai}.

Unfortunately, no further analysis of PPO's success is provided in that paper. In fact, the pessimistic surrogate objective which is the most critical component of PPO still has some limitations. 

As another potential improvement for TRPO and alternative to PPO, this paper focuses on a policy optimization algorithm, where its contributions are:
\begin{enumerate}
	\item It proposes a simple variant of TRPO called POP3D
	along with a new surrogate objective containing a point
	probability penalty item, which is symmetric lower
	bound to the square of the total variance divergence for
	two policy distributions. Specifically, it helps to stabilize the learning process and encourage exploration. Furthermore, it escapes from penalty item setting headache
	along with penalized version TRPO, where is arduous to
	select one fixed value for various environments.
	\item It achieves state-of-the-art results with a clear margin on
	49 Atari games within 40 million frame steps based on
	two shared metrics. Moreover, it also achieves competitive results compared with PPO in the continuous domain.
	\item It dives into the mechanism of PPO's improvement over TRPO by the perspective of solution manifold, which also plays an important role in our method.
	\item It enjoys almost all PPO's advantages such as easy implementation, fast learning ability.
\end{enumerate}



\section{Background and Related Works}
\subsection{Policy Gradient}
Agents interact with the environment and receive rewards  which are used to adjust their policy in turn. At state ${s_t}$, one agent takes strategy $\pi$ and transfers to  a new state ${s_{t+1}}$, rewarded ${r_t}$ by the environment. Maximizing discounted return (accumulated rewards) $R_t$ is its objective. In particular, given a policy $\pi$,  $R_t$ is defined as
\begin{equation}\label{R}
R_t=\sum_{n=0}^{\infty} (r_t+\gamma{r_{t+1}}+\gamma^2{r_{t+2}}+...+\gamma^n{r_{t+n}}).
\end{equation}

$\gamma$ is the discounted coefficient to control future rewards, which lies in the range (0, 1). Regarding a neural network with parameter $\theta$, the policy $\pi_\theta(a|s)$ can be learned by maximizing Equation~\ref{R} using the back-propagation algorithm. Particularly, given $Q{(s,a)}$ which represents the agent's return in state $s$ after taking action $a$, the objective function can be written as
\begin{equation}\label{policy gradient theorem}
\max_\theta \quad E_{s,a} \log \pi_\theta(a|s) Q(s,a).
\end{equation}

Equation~\ref{policy gradient theorem} lays the foundation for handfuls of policy gradient based algorithms,
Another variant  can be deduced by using  
\begin{equation}\label{advantage}
A(s,a) = Q(s,a)-V(s)
\end{equation}
to replace $Q(s,a)$ in Equation~\ref{policy gradient theorem} equivalently, $V(s)$ can be any function so long as  $V$ depends on   $s$ but not $a$. In most cases, state value function is used for $V$, which not only helps to reduce variations but has clear physical meaning. Formally, it can be written as
\begin{equation}\label{policy gradient theorem with advantage}
\max_\theta \quad E_{s,a} \log \pi_\theta(a|s) A(s,a).
\end{equation}

\subsection{Advantage Estimate}
One commonly used method for advantage calculation is one step estimation, which estimates 
\begin{equation}
\begin{split}
A(s_t,a) &= Q(s_t,a) - V(s_t) \\
&=r_t + \gamma V(s_{t+1})-V(s_t).
\end{split}
\end{equation}

A better estimate for advantage called generalized advantage estimation is proposed in \cite{2015arXiv150602438S}, where  one, two, three, up to $\infty$ time step estimate are combined and summarized  using $\lambda$ based weights, which helps to estimate more accurately. The generalized advantage estimator is defined as
\begin{equation}
\begin{split}
\hat{A}_t^{GAE(\gamma,\lambda)} &= \sum_{l=0}^{\infty}(\gamma\lambda)^l\delta_{t+l}^V \\
\delta_{t+l}^V &=r_{t+l} + \gamma V(s_{t+l+1})-V(s_{t+l}). 
\end{split}
\end{equation}

The parameter $\lambda$ meets $0\leq\lambda\leq1$, which controls the trade-off between bias and variance.
\subsection{Trust Region Policy Optimization} \label{TRPO_subsection}
Schulman \textit{et al.} propose TRPO  to  update the policy monotonically. In particular, its mathematical form is \\
\begin{equation}\label{Eq1_trpo}
\begin{split}
\max_\theta \quad  & E_t [ \frac{\pi_\theta(a_t|s_t)}{\pi_{\theta_{old}}(a_t|s_t)} \hat{A_t} ] \\
- &C E_t[KL[\pi_{\theta_{old}}(\cdot|s_t), \pi_\theta(\cdot|s_t)]] \\
C & = \frac{2\epsilon\gamma}{(1-\gamma)^2} \\
\epsilon &= \max_s E_{a\sim\pi_{\theta}(a|s)}[A_{\pi_{\theta_{old}}}(s,a)]), 
\end{split}
\end{equation}
where $C$ is the penalty coefficient. 

In practice, the policy update steps would be too small if $C$ is valued as Equation~\ref{Eq1_trpo}. In fact, it's intractable to calculate $C$  beforehand since it requires traversing all states to reach the maximum. Moreover, inevitable bias and variance will be introduced by estimating the advantages of old policy while training. Instead, a surrogate objective is maximized based on the KLD constraint between the old and new policy, which can be written as below,

\begin{equation}\label{Eq2_trpo_constraint}
\begin{split}
\max_\theta \quad  & E_t [ \frac{\pi_\theta(a_t|s_t)}{\pi_{\theta_{old}}(a_t|s_t)} \hat{A_t} ]\\
s.t.\quad   & E_t[KL[\pi_{\theta_{old}}(\cdot|s_t), \pi_\theta(\cdot|s_t)]] \le \delta 
\end{split}
\end{equation}
where $\delta$ is the KLD upper limitation.  In addition, the conjugate gradient algorithm is applied to solve Equation~\ref{Eq2_trpo_constraint} more efficiently. Two major problems have yet to be addressed: one is its complexity even using the conjugate gradient approach, another is compatibility  with architectures that involve noise or parameter sharing tricks \cite{2017arXiv170706347S}.
\subsection{Proximal Policy Optimization}
To overcome the shortcomings of TRPO, PPO replaces the original constrained problem  with a pessimistic clipped surrogate objective where KL constraint is implicitly imposed. The loss function can be written as
\begin{equation}\label{ppo_eq}
\begin{split}
L^{CLIP}(\theta) & =  E_t[ \min(r_t(\theta)\hat{A_t}, \\
&clip(r_t(\theta), 1-\epsilon, 1+\epsilon)\hat{A_t})] \\
&r_t(\theta) = \frac{\pi_\theta(a_t|s_t)}{\pi_{\theta_{old}}(a_t|s_t)}_,
\end{split}
\end{equation}
where $\epsilon$ is a hyper-parameter to control the clipping ratio.
Except for the clipped PPO version, KL penalty versions including fixed and adaptive KLD.  Besides, their simulation results convince that clipped PPO performs best with an obvious margin across various domains.

\section{Policy Optimization with Penalized Point Probability Distance}
Before diving into the details of POP3D,  we review some drawbacks of several methods, which partly motivate us.

\subsection{Disadvantages of Kullback-Leibler Divergence}

According to \textbf{Theorem 1} in TRPO paper, the following inequality holds.
\begin{equation}
\begin{split}
\eta(\pi_\theta) & \le  L_{\pi_{\theta_{old}}}(\pi_\theta)+\frac{2\epsilon\gamma}{(1-\gamma)^2}\alpha^2\\
\alpha &= D_{TV}^{\max}(\pi_{\theta_{old}}, \pi_{\theta})\\
D_{TV}^{\max}(\pi_{\theta_{old}}, \pi_\theta) &= \max_s D_{TV}(\pi_{\theta_{old}}|| \pi_\theta)
\end{split}
\end{equation}

TRPO replaces the square of total variation divergence $D_{TV}^{max}(\pi_{\theta_{old}},\pi_{\theta})$ by $D_{KL}^{\max}(\pi_{\theta_{old}},\pi_\theta) = \max_s D_{KL}(\pi_{\theta_{old}}||\pi_{\theta})$. Given a discrete distribution $p$ and $q$, their total variation divergence $D_{TV}(p||q)$ is defined by $\frac{1}{2}\sum_i|p_i-q_i|$. Obviously, $D_{TV}$ is symmetric by definition, while KLD is  asymmetric.

Formally, given state $s$, KLD of $\pi_{\theta_{old}}(\cdot|s)$  for  $\pi_{\theta}(\cdot|s)$  can be written as
\begin{equation}\label{DPP}
\begin{split}
D_{KL}(\pi_{\theta_{old}}(\cdot|s)||\pi_{\theta}(\cdot|s))&=\sum_{a}{}\pi_{\theta_{old}}(a|s)\\
&\ln\frac{\pi_{\theta_{old}}(a|s)}{\pi_{\theta}(a|s)}.
\end{split}
\end{equation}

Similarly, KLD in the continuous domain can be defined simply by replacing summation with integration.

The consequence of KLD's asymmetry leads to a non-negligible difference of whether choose $D_{KL}(\pi_{\theta_{old}}||\pi_{\theta})$ or $D_{KL}(\pi_{\theta}||\pi_{\theta_{old}})$. Sometimes, those two choices result in quite different solutions. Robert compared the forward and reverse KL on a distribution, one solution matches only one of the modes, and another covers both modes \cite{robert2014machine}. Therefore, KLD is not an ideal bound or approximation for the expected discounted cost. 

\subsection{Discussion about Pessimistic Proximal Policy}
In fact, PPO is called pessimistic proximal policy optimization\footnote{The word "pessimistic" is  used by the PPO paper.} in the meaning of its objective construction style.


Without loss of generality, supposing ${A_t} > 0$  for given state $s_t$ and action $a_t$, and the optimal choice is $a_t^\star$. When $a_t=a_t^\star$, a good update policy  is to increase the probability of action to a relatively high value $a_t^\star$ by adjusting $\theta$. However, the clipped item $clip(r_t(\theta), 1-\epsilon, 1+\epsilon)\hat{A_t}$ will fully contribute to the loss function by the minimum operation, which ignores further reward by zero gradients even though it's the optimal action. Other situation with ${A_t} < 0$ can be analyzed in the same manner.

However, if the pessimistic limitation is removed, PPO's performance decreases dramatically \cite{2017arXiv170706347S}, which is again confirmed by our preliminary experiments. In a word, the pessimistic mechanism plays a very critical role for PPO by a relatively weak preference for good action decision for a given state, which in turn affects learning efficiency.

\subsection{Restricted Solution Manifold}
To be simple, we don't take the model identifiability issues along with deep neural network into account here because they don't affect the following discussion much \cite{goodfellow2016deep}. Suppose $\pi_{\theta\star}$ is the optimal solution for a given environment, in most cases, more than one parameter set for $\theta$ can generate the ideal policy, especially when $\pi_{\theta\star}$ is learned by a deep neural network. In other words, the relationship between $\theta$ and $\pi_{\theta\star}$ is many to one. On the other hand, when agents interact with the environment using policy represented by neural networks, the action is taken approximately strongly corrected with the highest probability value. Although some strategies of enhancing exploration are applied, they don't affect the policy much in the meaning of expectation. 

Taking the Atari-Pong game for example, when an agent sees a Pong ball is coming nearly, its optimal policy is moving the racket to  the right position. The probability of this action is a relatively high value such as 0.95 and it's near to impossible that this value is 1.0 since it's produced by a softmax operation on the several discrete actions. In fact, we hardly obtained the optimal solution accurately, instead, our goal is a good enough answer. Namely, $\theta_1$ outputting a probability 0.95 and $\theta_2$ with 0.9 for the right action are both good answers. During the training process, these similar events occur frequently.

Using a penalty such as KLD cannot handle it effectively, because it involves all of the actions' probabilities. Moreover, it doesn't stop penalizing unless two distributions become exactly indifferent or the advantage item is large enough to compensate for the KLD cost. Therefore, even if $\theta$ outputs $\theta_{old}$ the same high probability for the right action, it's still penalized owing to probabilities mismatch for other uncritical actions. Indeed, when a person is asked to make the choice, corresponding action will be taken only if the probability is above a threshold. From the perspective of the manifold, if the optimal parameters constitute a solution manifold. The KLD penalty will act until $\theta$ exactly locates in the solution if possible. However, if the agent concentrates only on critical actions like a human, it's much easier to approach the manifold, which in fact, expands the solution manifold at least one dimension such as curves to surfaces and surfaces to spheres at best.

%

Besides, since mini-batch is a commonly used trick for training neural networks, removing this unexpected penalty helps to decrease penalty noises, which are reflected by the corresponding gradient. 

\subsection{Exploration}
One shared highlight in reinforcement learning is the balance between exploitation and exploration. For a policy-gradient  algorithm, entropy is added in the total loss to encourage exploration in most case. When included in loss function, KLD penalizes the old and new policy probability mismatch for all possible actions as Equation~\ref{DPP}. This strict punishment for every action's probability mismatch, which discourages exploration.

\subsection{Point Probability Distance}

To overcome the above-mentioned shortcomings, we propose a surrogate objective with the point probability distance penalty, which is symmetric and more optimistic than PPO. In the discrete domain, when the agent takes action $a$, the point probability distance between  $\pi_{\theta_{old}}(\cdot|s)$  and  $\pi_{\theta}(\cdot|s)$   is defined by

\begin{equation}\label{PPD}
\begin{split}
D_{pp}(\pi_{\theta_{old}}(\cdot|s), \pi_{\theta}(\cdot|s))&=({\pi_{\theta_{old}}(a|s)}\\
&-{\pi_{\theta}(a|s)})^2.
\end{split}
\end{equation}

Attention should be paid to the penalty definition item, the distance is measured by the point probability, which emphasizes its mismatch for the \textbf{sampled} actions for a state.   
Undoubtedly, $D_{pp}$ is symmetry by definition. Furthermore, it can be proved that $D_{pp}$ is indeed a lower bound for the total variance divergence $D_{TV}$. As a special case, it can be easily proved that for binary distribution, $D_{TV}^2(p||q) = D_{pp}(p||q)$. 

\begin{theorem}
	For two discrete probability distributions p and q with $K$ values,  then $D_{TV}^2(p||q) \ge D_{pp}(p||q)$ holds.
\end{theorem}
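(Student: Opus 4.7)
The plan is to reduce the inequality to a single-coordinate bound and invoke the triangle inequality together with the constraint that probability distributions have total mass one. First I would unfold the definitions: by the given definition, $D_{TV}(p\|q)=\tfrac12\sum_{j=1}^K|p_j-q_j|$, while for the sampled action $a$ we have $D_{pp}(p\|q)=(p_a-q_a)^2$. Since both sides of the claimed inequality are nonnegative, it suffices to show the unsquared version $|p_a-q_a|\le D_{TV}(p\|q)$, and then square.

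Next I would exploit the fact that $p$ and $q$ are probability distributions on the same alphabet, so $\sum_{j=1}^K (p_j-q_j)=0$. Isolating the $a$-th coordinate gives the identity
\begin{equation*}
p_a-q_a = -\sum_{j\neq a}(p_j-q_j).
\end{equation*}
Applying the triangle inequality to the right-hand side yields $|p_a-q_a|\le \sum_{j\neq a}|p_j-q_j|$. Adding $|p_a-q_a|$ to both sides produces
\begin{equation*}
2|p_a-q_a|\le \sum_{j=1}^K|p_j-q_j| = 2\,D_{TV}(p\|q),
\end{equation*}
so $|p_a-q_a|\le D_{TV}(p\|q)$. Squaring both sides (valid since both are nonnegative) gives $D_{pp}(p\|q)=(p_a-q_a)^2\le D_{TV}^2(p\|q)$, which is the desired inequality.

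I do not anticipate any genuine obstacle; the argument is essentially a one-line consequence of the triangle inequality together with the mass-one constraint. The only mildly subtle point is that the statement holds for every choice of the sampled index $a$, not just the maximizing one — which is precisely why the mass-one identity is needed to transfer the $a$-th deviation into a bound involving the full sum. If desired, I would also note the boundary case: equality holds exactly when $|p_a-q_a|=\sum_{j\neq a}|p_j-q_j|$, i.e.\ when all the discrepancy mass is concentrated on coordinate $a$ and one opposing coordinate, recovering the binary-distribution remark made just before the theorem statement.
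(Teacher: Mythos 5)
Your proof is correct and uses essentially the same argument as the paper: both isolate the sampled coordinate, apply the triangle inequality to the remaining coordinates, and use $\sum_j p_j = \sum_j q_j = 1$ to convert that off-coordinate sum back into $|p_a - q_a|$. The only cosmetic difference is that you establish the unsquared bound $|p_a-q_a|\le D_{TV}(p\|q)$ and then square, whereas the paper manipulates the squared quantities directly.
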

\begin{proof}
	Let  $a=p_l, b=q_l$ for any $l$, and suppose $a\ge b$ without loss of generalization. So,
	\begin{eqnarray*}
		D_{TV}^2(p||q) & = & (\frac{1}{2}\sum_{i=1}^{K}|p_i-q_i|)^2 \\
		& = & (\frac{1}{2}\sum_{i=1,i\ne l}^K|p_i-q_i|+\frac{1}{2}|p_l-q_l|)^2 \\
		& \ge & (\frac{1}{2}|\sum_{i=1,i\ne l}^Kp_i-q_i|+\frac{1}{2}(a-b))^2 \\
		& = & (\frac{1}{2}|1-a-(1-b)| + \frac{1}{2}(a-b)) ^2\\
		& = & (\frac{1}{2}(a-b)+\frac{1}{2}(a-b))^2 \\
		& = & D_{pp}(p||q)
	\end{eqnarray*}
	
\end{proof}
Q.E.D.

Since $0\leq{\pi_{\theta}(a|s)}\leq1$ holds for discrete action space,  $D_{pp}$ has a lower and upper boundary: $0\leq D_{pp}\leq1$. Moreover, $D_{pp}$ is less sensitive to action space dimension than KLD, which has a similar effect as PPO's clipped ratio to increase robustness and enhance stability.\\

Equation~\ref{PPD} stays unchanged for the continuous domain, and the only difference is  $ \pi_{\theta}(a|s)$ represents point probability density instead of probability.
\subsection{POP3D}
After we have defined the point probability distance, we use a new surrogate objective for POP3D, which can be written as 
\begin{equation}\label{POP3DOBJ}
\begin{split}
\max_\theta \quad  & E_t [ \frac{\pi_\theta(a_t|s_t)}{\pi_{\theta_{old}}(a_t|s_t)} \hat{A_t} - \\
\quad & \beta D_{pp}(\pi_{\theta_{old}}(\cdot|s), \pi_{\theta}(\cdot|s))],
\end{split}
\end{equation}
where $\beta$ is the penalized coefficient. These combined advantages  lead to considerable performance improvement, which escapes from the dilemma of choosing preferable penalty coefficient. Besides, we use generalized advantage estimates to calculate $\hat{A_t}$. Algorithm~\ref{pop3d} shows the complete iteration process of POP3D. Moreover, it possesses the same computing cost and data efficiency as PPO. 

\begin{algorithm}[tb]
	\caption{POP3D}
	\label{pop3d}
	\begin{algorithmic}[1]
		\STATE {\bfseries Input:} max iterations $L$ , actors $N$, epochs $K$
		\FOR{$iteration=1$ {\bfseries to} $L$}
		\FOR{$actor=1$ {\bfseries to} $N$}
		\STATE Run policy $\pi_{\theta_{old}}$ for $T$ time steps 
		\STATE Compute advantage estimations $\hat{A}_1,...,\hat{A}_T$
		\ENDFOR
		\FOR{$epoch=1$ {\bfseries to } $K$}
		\STATE Optimized loss objective wrt $\theta$ with mini-batch size $M \leq NT$, then update $\theta_{old}\leftarrow\theta$.
		\ENDFOR
		\ENDFOR
	\end{algorithmic}
\end{algorithm}

\subsection{Relationship With PPO}
To conclude this section, we take some time to see why PPO works by taking the above viewpoints into account. 

When we pour more attention to  Equation~\ref{ppo_eq}, the ratio $r_t(\theta)$ only involves the probability for given action $a$, which is chosen by policy $\pi$. In other words,  all other actions' probabilities except $a$ are not activated, which no longer contribute to back-propagation and allow probability mismatch. Obviously, this procedure behaves similarly as POP3D, which expands the restricted solution manifold.

Above all, POP3D is designed to conform with the regulations for overcoming above mentioned problems, and in the next section experiments from commonly used benchmarks will evaluate its performance.

\section{Experiments}
\subsection{Controlled Experiments Setup}

OpenAI Gym is a well-known simulation environment to test and evaluate various reinforcement algorithms, which is composed of  both discrete (Atari) and continuous (Mujoco) domains \cite{brockman2016openai}. Most of recent deep reinforcement learning  methods such as DQN variants \cite{van2016deep,wang2015dueling,schaul2015prioritized,bellemare2017distributional,hessel2017rainbow}, A3C, ACKTR, PPO are evaluated using only one set of hyper-parameters\footnote{DQN variants are evaluated in Atari environment since they are designed to solve problems about discrete action space. However, policy gradient based algorithms can handle both continuous and discrete problems.}. Therefore,  we evaluate POP3D's performance on 49 Atari games(v4, discrete action space ) and 7 Mujoco(v2, continuous space).

Since PPO is  a distinguished RL algorithm which defeats various methods such as A3C, A2C ACKTR, we focus on a detailed quantitative comparison with fine-tuned PPO. And we don't consider large scale distributed algorithms Apex-DQN \cite{horgan2018distributed} and IMPALA \cite{espeholt2018impala}, because we concentrate on comparable and fair evaluation, while the latter is designed to apply with large scale parallelism. Nevertheless, some orthogonal improvements from those methods have the potentials to  improve our method further. Furthermore, we include TRPO to acts as a baseline method. In addition, quantitative comparisons between KLD and  point probability penalty helps to convince the critical role of the latter, where the former strategy is named fixed KLD in \cite{2017arXiv170706347S} and can act as another good baseline in this context, named by \textbf{BASELINE} below.

In particular, we retrained one agent for each game with fine-tuned hyper-parameters\footnote{
	We use OpenAI's PPO and TRPO implementation, which provides a good baseline for various reinforcement learning algorithms: \url{https://github.com/openai/baselines.git}.}. 
To avoid the problems of reproduction about reinforcement algorithms mentioned in \cite{henderson2017deep}, we take the following measures:

\begin{itemize}
	\item Use the same training steps and make use of the same amount of game frames(40M for Atari game and 10M for Mujoco)\@.
	\item Use the same neural network structures, which is CNN model with one action head and one value head for Atari game, and fully-connected model with one value head and one action head which produces the mean and standard deviation of diagonal Gaussian distribution as PPO\@. 
	\item Initialize  parameters using the  same strategy as PPO\@.
	\item Keep Gym wrappers from  Deepmind such as reward clipping and frame stacking unchanged for Atari domain, and enable 30 no-ops at the beginning of each episode\@.
	\item Use Adam optimizer \cite{kingma2014adam} and decrease  $\alpha$ linearly from 1 to 0 for Atari domain as PPO\@.
\end{itemize}

To facilitate further comparisons with other approaches,  we release the seeds and  detailed results\footnote{Experiment results can be downloaded from \url{https://drive.google.com/file/d/1c79TqWn74mHXhLjoTWaBKfKaQOsfD2hg/view?usp=sharing}.}(across the entire training process for different trials). In addition, we randomly select three seeds from \{0, 10, 100, 1000, 10000\} for two domains, \{10,100,1000\} for Atari and \{0,10,100\} for Mujoco in order to decrease unfavorable subjective bias stated in \cite{henderson2017deep}.
\subsection{Evaluation Metrics}
PPO utilizes two score metrics for evaluating  agents performance using various RL algorithms. One is the mean score of last 100 episodes $Score_{100}$, which measures how high a strategy can hit eventually. Another is the average score across all episodes $Score_{all}$, which evaluates how fast an agent learns. In this paper, we  conform to this routine and calculate individual metric by averaging three seeds in the same way.

\subsection{Discrete Domain Comparisons}
\subsubsection{Hyper-parameters}
We search hyper-parameter four times for the penalty coefficient $\beta$ based on four Atari games while keeping other hyper-parameters unchanged as PPO and fix $\beta=5.0$ to train all Atari games. For BASELINE, we also search hyper-parameter four times on penalty coefficient $\beta$ and choose $\beta=10.0$. To save space, detailed hyper-parameter setting can be found in Table~ \ref{PPO-Atari-game-hyperparameter}, \ref{POP3D-Atari-game-hyperparameter} and  \ref{Baseline-Atari-game-hyperparameter}.

This process is not beneficial for POP3D owing to missing optimization for all hyper-parameters. There are two reasons to make this choice. On the one hand, it’s the simplest way to make a relatively fair comparison group such as keeping
the same iterations and epochs within one loop to our knowledge. On the other hand, this process imposes low search requirements for time and resource. That's to say,  we can draw a  conclusion that our method is at least competitive to PPO if it performs better on benchmarks.  

\subsubsection{Comparison Results}

The final score of  each game  is averaged by three different seeds and the highest is in bold. As Table~\ref{atari_comparison_summary} shows, POP3D outperforms 32 across 49 Atari games in view of the final score, followed by PPO with 11, BASELINE with 5 and TRPO with 1. Interestingly, for games that POP3D score highest, BASELINE score worse than PPO more often than the other way round, which means that POP3D is not just an approximate version of BASELINE.  

For another metric, POP3D wins 20 out of 49 Atari games which matches PPO with 18, followed by BASELINE with 6, and last ranked by TRPO with 5. 

Detailed performance during the whole training process can be found in Figure~\ref{atari-fig}. If we measure the stability of an algorithm by the score variance of different trials, POP3D scores high with good stability across various seeds. And PPO behaves worse  in Game Kangaroo and UpNDown. Interestingly, BASELINE shows a large variance for different seeds for several games such as BattleZone, Freeway, Pitfall and Seaquest.

In sum, POP3D reveals its better capacity to score high and similar fast learning ability in this domain. 
The detailed metric for each game is listed in Table~\ref{final-100-episodes-score} and ~\ref{all-episodes-score}.
\begin{table}
	\begin{center}
		\begin{tabular}{lrrrr}
			\toprule
			Metric & PPO & POP3D & BASELINE & TRPO \\
			\midrule
			$Score_{100}$ & 11 &\textbf{32} & 5 & 1 \\
			$Score_{all}$  & 18 & \textbf{20} &6 &5\\
			\bottomrule
		\end{tabular}
	\end{center}
	\caption{The number of games "won" by each algorithm for Atari game, where the score metric is averaged on three seeds.}
	\label{atari_comparison_summary}
\end{table}

\subsection{Continuous Domain Comparisons}

In this section, we focus on comparisons between POP3D and PPO in Mujoco domain.

\subsubsection{Hyper-parameters}

For PPO, we use the same  hyper-parameter configuration as \cite{2017arXiv170706347S}. Regarding POP3D, we search on two games  three times and select 5.0 as the penalty coefficient. More details about hyper-parameters for PPO and POP3D  are listed in Table~\ref{PPO-Mujoco-game-hyperparameter} and ~\ref{POP3D-Mujoco-game-hyperparameter} of Section~ \ref{hyper-mujoco}. Unlike the Atari domain,  we we utilize the constant learning rate  strategy as \cite{2017arXiv170706347S} in the continuous domain instead of the linear decrease strategy.

\subsubsection{Comparison Results}

The scores are also averaged on three trials and summarized in Table~\ref{comparison_summary}. POP3D occupies 5 out of 7 games on $Score_{100}$, and  3  on $Score_{all}$. Evaluation metrics of both across different games are illustrated in  Table~\ref{final_100_episodes_score_mujoco} and \ref{all-episodes-score-mujoco}. Each algorithm's score performance with iteration steps is shown in Figure~\ref{mujoco-fig}.
\begin{table}
	\begin{center}
		\begin{tabular}{lrr}
			\toprule
			Metric& PPO & POP3D \\
			\midrule
			$Score_{100}$  &2 &  \textbf{5} \\
			$Score_{all}$   & \textbf{4} &3\\
			\bottomrule
		\end{tabular}
	\end{center}
	\caption{The number of games won by each algorithm for Mujoco game, where the score metric is averaged on three seeds.}
	\label{comparison_summary}
\end{table} 

\begin{table}
	\begin{center}
		\begin{tabular}{ >{\raggedright}p{2cm} >{\raggedright\arraybackslash}p{1.4cm}>{\raggedright\arraybackslash}p{1.4cm}>{\raggedright\arraybackslash}p{1.4cm}>{\raggedright\arraybackslash}p{1.4cm}}
			\toprule
			game & POP3D & PPO & BASELINE & TPRO\\
			\midrule
			Alien &\textbf{1510.80}& 1431.17& 1311.23& 1110.40\\
			Amidar & 729.15&\textbf{790.75}& 655.10& 200.56\\
			Assault&\textbf{5400.13}& 4438.82& 1846.75& 1363.46\\ 
			Asterix &\textbf{4310.67}& 3483.17& 3657.67& 2651.33\\
			Asteroids &\textbf{2488.10}& 1605.33& 1615.37& 2205.70\\
			Atlantis &\textbf{2193605.67}& 2140536.33& 1515993.33& 1419104.67\\
			BankHeist &\textbf{1212.23}& 1206.67& 1124.43& 1125.17\\
			BattleZone &\textbf{15466.67}& 14766.67& 14690.00& 15123.33\\
			BeamRider & 4549.00& 2624.19&\textbf{6898.09}& 5073.75\\
			Bowling & 38.99&\textbf{47.27}& 30.48& 31.24\\
			Boxing &\textbf{97.23}& 93.70& 65.33& 50.07\\
			Breakout &\textbf{458.41}& 281.93& 67.70& 40.65\\
			Centipede & 3315.44&\textbf{3565.18}& 3393.93& 3353.14\\
			Chopper-\\
			Command &\textbf{6308.33}& 4872.67& 2676.00& 2286.67\\
			CrazyClimber &\textbf{120247.33}& 105940.00& 98219.67& 87522.33\\
			DemonAttack  &\textbf{61147.33}& 26740.57& 57476.65& 21525.08\\
			DoubleDunk  &\textbf{-7.89}& -11.22& -8.61& -10.04\\
			Enduro  & 459.85&\textbf{698.46}& 518.41& 365.95\\
			FishingDerby &\textbf{28.99}& 17.72& -64.27& -69.64\\
			Freeway &\textbf{21.21}& 21.11& 18.37& 20.89\\
			Frostbite &\textbf{316.87}& 280.30& 280.30& 291.77\\
			Gopher &\textbf{6207.00}& 1791.00& 940.87& 938.27\\
			Gravitar  & 557.17&\textbf{753.50}& 449.00& 495.17\\
			IceHockey & -4.12& -4.83&\textbf{-3.61}& -4.61\\
			Jamesbond & 527.17& 488.17& 685.17&\textbf{901.67}\\
			Kangaroo & 3891.67&\textbf{6845.00}& 1850.00& 1214.67\\
			Krull & 7715.68&\textbf{8329.08}& 7204.95& 4881.65\\
			KungFuMaster &\textbf{33728.00}& 29958.67& 29843.67& 26808.00\\
			Montezuma-\\
			Revenge & 0.00&\textbf{10.67}& 0.67& 0.00\\
			MsPacman & 1683.87&\textbf{1981.50}& 1170.70& 1133.57\\
			NameThisGame &\textbf{6065.63}& 5397.47& 5672.60& 5604.10\\
			Pitfall &\textbf{0.00}& -2.32& -17.26& -43.60\\
			Pong & 20.50&\textbf{20.80}& 20.79& 19.63\\
			PrivateEye & 79.67& 36.50&\textbf{99.67}& 99.33\\
			Qbert &\textbf{15396.67}& 14556.83& 4114.00& 3781.58\\
			Riverraid &\textbf{8052.23}& 7360.40& 7722.00& 6773.67\\
			RoadRunner &\textbf{44679.67}& 36289.33& 43626.33& 24061.33\\
			Robotank & 4.60& 14.15&\textbf{24.60}& 24.18\\
			Seaquest  &\textbf{1807.47}& 1470.60& 1501.47& 926.40\\
			SpaceInvaders &\textbf{1216.15}& 944.63& 814.53& 634.07\\
			StarGunner &\textbf{48984.00}& 33862.00& 47738.00& 33442.67\\
			Tennis &\textbf{-8.32}& -13.74& -19.13& -18.40\\
			TimePilot& 3770.33& 5321.33&\textbf{6278.33}& 5701.00\\
			Tutankham&\textbf{241.21}& 177.58& 135.80& 136.21\\
			UpNDown &\textbf{242701.51}& 153160.66& 11815.87& 10949.53\\
			Venture &\textbf{36.33}& 0.00& 4.00& 0.00\\
			VideoPinball &\textbf{37780.70}& 31577.24& 21438.64& 25095.20\\
			WizardOfWor & 4704.00&\textbf{4886.67}& 3533.67& 3103.00\\
			Zaxxon&\textbf{9472.00}& 5728.67& 1179.67& 4796.67\\
			\bottomrule
		\end{tabular}
	\end{center}
	\caption{Mean final scores (last 100 episodes) of PPO, POP3D, BASELINE and TRPO on Atari games after 40M frames. The results are averaged on three trials.}
	\label{final-100-episodes-score}
\end{table}

In summary, both metrics indicates that POP3D is competitive to PPO in the continuous domain.


\section{Conclusion}

In this paper, we introduce a new reinforcement learning algorithm called POP3D (Policy Optimization with Penalized Point Probability Distance), which acts as a TRPO variant like PPO. Compared with KLD that is an upper bound for the square of total variance divergence between two distributions, the penalized point probability distance is a symmetric lower bound. Besides, it equivalently  expands the optimal solution manifold effectively while encouraging exploration, which is a similar mechanism implicitly possessed by PPO. The proposed method not only possesses several critical improvements from PPO but outperforms with a clear margin on 49  Atari games from the respective of final scores and meets PPO's match as for fast learning ability. 

More interestingly, it not only suffers less from the penalty item setting headache along with TRPO, where is arduous to select one fixed value for various environments, but outperforms fixed KLD baseline from PPO. In summary, POP3D is highly competitive and an alternative to PPO.\\
\clearpage
\bibliographystyle{named}
\bibliography{ijcai19}
\newpage

\appendix
\section{Hyper-parameters}
\subsection{Atari}\label{hyper-atari}
PPO's and  POP3D's hyper-parameters for Mujoco game are respectively listed in Table~\ref{PPO-Atari-game-hyperparameter} and \ref{POP3D-Atari-game-hyperparameter}.
\begin{table}[ht]
	\begin{center}
		\begin{tabular}{lr}
			\toprule
			Hyper-parameter & Value \\
			\midrule
			Horizon (T)    &128 \\
			Adam step-size &2.5 $\times10^{-4}\times\alpha$\\
			Num epochs    & 3 \\
			Mini-batch size    & 32$\times8$\\
			Discount $(\gamma)$     & 0.99\\
			GAE parameter $(\lambda)$      & 0.95 \\
			Number of actors      & 8\\
			Clipping parameter   & 0.1$\times\alpha$ \\
			VF coeff. &1 \\
			Entropy coeff. &0.01 \\
			\bottomrule
		\end{tabular}
	\end{center}
	\caption{PPO's hyper-parameters for Atari game.}
	\label{PPO-Atari-game-hyperparameter}
\end{table}

\begin{table}[ht]
	\begin{center}
		\begin{tabular}{lr}
			\toprule
			Hyper-parameter & Value \\
			\midrule
			Horizon (T)    &128 \\
			Adam step-size &2.5 $\times10^{-4}\times\alpha$\\
			Num epochs    & 3 \\
			Mini-batch size    & 32$\times8$\\
			Discount $(\gamma)$     & 0.99\\
			GAE parameter $(\lambda)$      & 0.95 \\
			Number of actors      & 8\\
			VF coeff. &1 \\
			Entropy coeff. &0.01 \\
			KL penalty coeff. &5.0 \\
			\bottomrule
		\end{tabular}
	\end{center}
	\caption{POP3D's hyper-parameters for Atari game.}
	\label{POP3D-Atari-game-hyperparameter}
\end{table}

\begin{table}[ht]
	\begin{center}
		\begin{tabular}{lr}
			\toprule
			Hyper-parameter & Value \\
			\midrule
			Horizon (T)    &128 \\
			Adam step-size &2.5 $\times10^{-4}\times\alpha$\\
			Num epochs    & 3 \\
			Mini-batch size    & 32$\times8$\\
			Discount $(\gamma)$     & 0.99\\
			GAE parameter $(\lambda)$      & 0.95 \\
			Number of actors      & 8\\
			VF coeff. &1 \\
			Entropy coeff. &0.01 \\
			KL penalty coeff. &10.0 \\
			\bottomrule
		\end{tabular}
	\end{center}
	\caption{BASELINE's hyper-parameters for Atari game.}
	\label{Baseline-Atari-game-hyperparameter}
\end{table}	

\subsection{Mujoco}\label{hyper-mujoco}

PPO's and  POP3D's hyper-parameters for Mujoco game are respectively listed in Table~\ref{PPO-Mujoco-game-hyperparameter} and \ref{POP3D-Mujoco-game-hyperparameter}.
\begin{table}
	\begin{center}
		\begin{tabular}{lr}
			\toprule
			Hyper-parameter & Value \\
			\midrule
			Horizon (T)    &2048 \\
			Adam step-size &3 $\times10^{-4}$\\
			Num epochs    & 10 \\
			Mini-batch size    & 64\\
			Discount $(\gamma)$     & 0.99\\
			GAE parameter $(\lambda)$      & 0.95 \\
			Clipping parameter &0.2 \\
			\bottomrule
		\end{tabular}
		\caption{PPO's hyper-parameters for Mujoco game.}
		\label{PPO-Mujoco-game-hyperparameter}
	\end{center}
\end{table}

\begin{table}
	\begin{center}
		\begin{tabular}{lr}
			\toprule
			Hyper-parameter & Value \\
			\midrule
			Horizon (T)    &2048 \\
			Adam step-size &3 $\times10^{-4}$\\
			Num epochs    & 10 \\
			Mini-batch size    & 64\\
			Discount $(\gamma)$     & 0.99\\
			GAE parameter $(\lambda)$      & 0.95 \\
			KL penalty coeff. &5.0 \\
			\bottomrule
		\end{tabular}
		\caption{POP3D's hyper-parameters for Mujoco game.}
		\label{POP3D-Mujoco-game-hyperparameter}
	\end{center}
\end{table}	

\section{Score Tables and Curves}
Mean scores of various methods for Atari domain are listed in Table~\ref{all-episodes-score}.

\begin{table*}
	\begin{center}
		\begin{tabular}{lrrrr}
			\toprule
			game  & POP3D& PPO  & BASELINE & TRPO\\
			\midrule
			Alien &\textbf{1147.29}& 1115.94& 851.13& 841.08\\
			Amidar & 299.55&\textbf{413.46}& 295.91& 169.12\\
			Assault & 2139.15&\textbf{2168.93}& 1159.50& 971.78\\
			Asterix & 2004.43&\textbf{2102.10}& 1884.68& 1342.83\\
			Asteroids & 1652.48& 1470.46& 1477.71&\textbf{1760.73}\\
			Atlantis & 488134.03&\textbf{596807.27}& 192798.74& 174394.94\\
			BankHeist  & 662.26& 643.94&\textbf{859.25}& 831.95\\
			BattleZone & 11131.44& 9387.77& 11674.30&\textbf{12918.39}\\
			BeamRider & 1965.27& 1460.59&\textbf{3321.25}& 2431.63\\
			Bowling & 37.97&\textbf{39.41}& 33.90& 30.99\\
			Boxing  &\textbf{83.12}& 78.61& 27.92& 23.07\\
			Breakout &\textbf{143.60}& 124.98& 29.99& 26.56\\
			Centipede & 3056.81&\textbf{3344.63}& 3042.48& 3142.22\\
			Chopper-\\
			Command  &\textbf{3269.47}& 3106.14& 1780.38& 1595.82\\
			CrazyClimber &\textbf{97257.52}& 90169.60& 69258.31& 63189.78\\
			DemonAttack & 7611.27& 7180.43&\textbf{9814.42}& 6204.68\\
			DoubleDunk &\textbf{-13.70}& -15.45& -15.93& -14.57\\
			Enduro & 107.84&\textbf{321.20}& 92.59& 140.67\\
			FishingDerby &\textbf{-21.00}& -27.51& -81.90& -81.97\\
			Freeway &\textbf{17.76}& 15.87& 15.93& 17.33\\
			Frostbite &\textbf{276.47}& 267.73& 270.42& 270.57\\
			Gopher &\textbf{1556.29}& 1196.20& 900.74& 875.93\\
			Gravitar & 413.20&\textbf{509.81}& 342.74& 317.86\\
			IceHockey& -4.67& -5.50&\textbf{-4.61}& -5.21\\
			Jamesbond & 358.54& 394.45& 380.91&\textbf{519.01}\\
			Kangaroo & 1614.63&\textbf{2199.74}& 937.98& 566.85\\
			Krull  & 6538.16&\textbf{7195.24}& 4760.66& 3861.87\\
			KungFuMaster & 23253.96&\textbf{23283.31}& 19637.58& 18293.12\\
			Montezuma-\\
			Revenge  & 0.14&\textbf{0.74}& 0.22& 0.12\\
			MsPacman & 1214.09&\textbf{1482.77}& 860.63& 864.84\\
			NameThisGame &\textbf{5353.14}& 5199.37& 4562.32& 4504.67\\
			Pitfall &\textbf{-2.41}& -5.81& -31.27& -33.93\\
			Pong  &\textbf{13.24}& 12.83& 7.20& -2.91\\
			PrivateEye  & 87.37& 52.76& 56.70&\textbf{98.79}\\
			Qbert & 5852.10&\textbf{6744.13}& 1760.92& 1679.03\\
			Riverraid & 5260.89&\textbf{5487.17}& 5220.64& 4549.22\\
			RoadRunner &\textbf{25456.31}& 24688.07& 20385.91& 16269.40\\
			Robotank & 3.08& 8.65& 13.89&\textbf{14.57}\\
			Seaquest &\textbf{1487.84}& 1120.15& 1112.51& 848.47\\
			SpaceInvaders &\textbf{693.26}& 632.17& 552.50& 483.48\\
			StarGunner & 14734.11& 13643.80&\textbf{16288.35}& 13341.23\\
			Tennis &\textbf{-19.86}& -21.80& -21.84& -21.04\\
			TimePilot& 3396.61& 4410.87&\textbf{4718.46}& 4544.68\\
			Tutankham &\textbf{179.96}& 152.72& 103.95& 109.18\\
			UpNDown & 38728.48&\textbf{43208.99}& 5430.22& 7085.02\\
			Venture &\textbf{15.89}& 14.66& 0.57& 0.03\\
			VideoPinball & 27346.44&\textbf{27549.55}& 23998.09& 23705.39\\
			WizardOfWor  & 2340.60&\textbf{2743.40}& 2409.94& 2045.17\\
			Zaxxon  &\textbf{3739.56}& 1813.90& 256.78& 1521.28\\
			\bottomrule
		\end{tabular}
	\end{center}
	\caption{All episodes mean scores of PPO, POP3D, BASELINE and TRPO on Atari games after 40M frames. The results are averaged by three trials.}
	\label{all-episodes-score}
	
\end{table*}

\begin{figure*}
	\begin{center}
		\includegraphics[width=1.5\columnwidth]{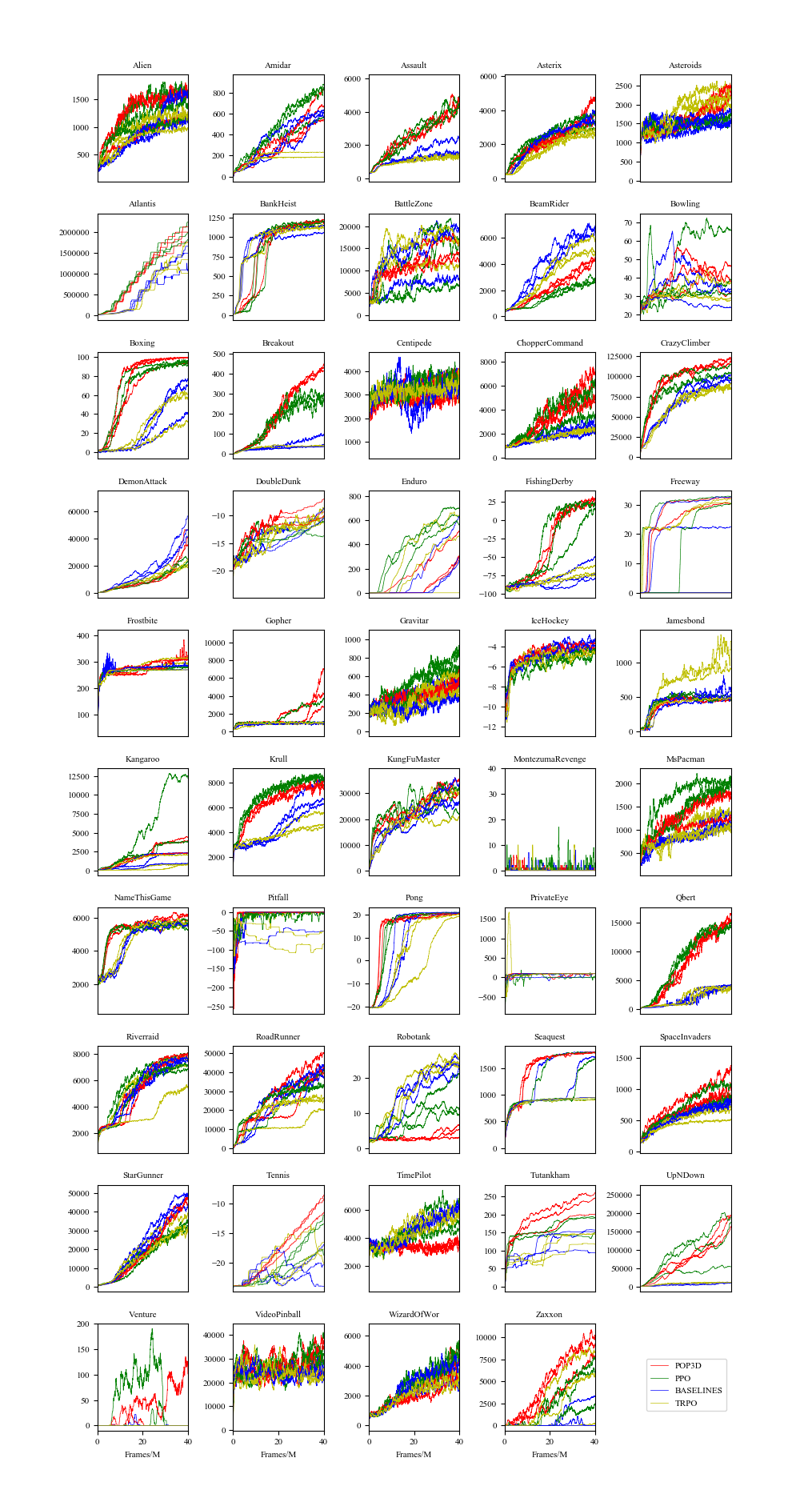}
		\caption{Score curves of three methods on 49 Atari games within 40 million frame steps.}
		\label{atari-fig}
	\end{center}
\end{figure*}

\begin{figure*}
	\begin{center}
		\includegraphics[width=1.15\columnwidth]{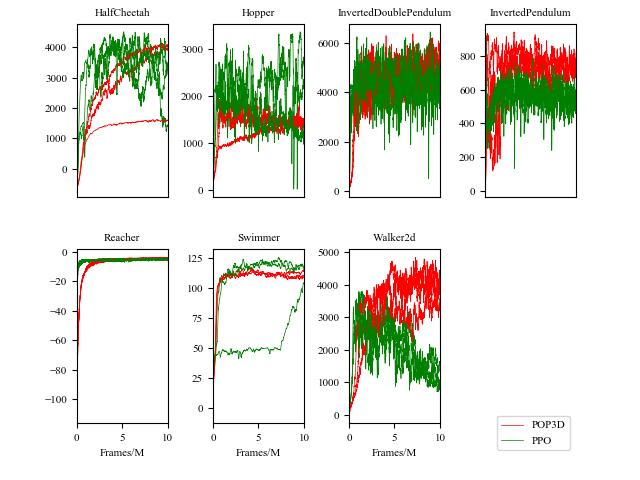}
		\caption{Score curves of three methods on 7 Mujoco games within 10 million frame steps.}
		\label{mujoco-fig}
	\end{center}
\end{figure*}

\begin{table*}
	\begin{center}
		\begin{tabular}{lrrr}
			\toprule
			game & PPO & POP3D  \\
			\midrule
			HalfCheetah & 2726.03 &\textbf{3184.54}  \\
			Hopper & \textbf{2027.21} &1452.09 \\
			InvertedDoublePendulum & 4455.03 &\textbf{4907.64}  \\
			InvertedPendulum & 544.02 &\textbf{741.94}  \\
			Reacher& -5.00 &\textbf{-4.29}  \\ 
			Swimmer & \textbf{111.88} &111.08  \\ 
			Walker2d & 1112.25 &\textbf{3966.01} \\ 
			\bottomrule
		\end{tabular}
	\end{center}
	\caption{Mean final scores (last 100 episodes) of PPO ,POP3D on Mujoco games after 10M  frames. The results are averaged on three trials.}
	\label{final_100_episodes_score_mujoco}
\end{table*}

\begin{table*}
	\begin{center}
		\begin{tabular}{lrr}
			\toprule
			game & PPO & POP3D\\
			\midrule
			HalfCheetah & \textbf{3250.22} &2373.30  \\
			Hopper & \textbf{1767.14} &1257.72  \\
			InvertedDoublePendulum & \textbf{3684.92} &2561.77  \\
			InvertedPendulum  & 531.77 &\textbf{552.98}  \\
			Reacher & \textbf{-5.94} &-8.05  \\
			Swimmer  & 94.01 &\textbf{108.27}  \\
			Walker2d  & 1770.37 &\textbf{2439.54}  \\
			\bottomrule
		\end{tabular}
	\end{center}
	\caption{All episodes mean scores of PPO ,POP3D on Mujoco games after 10M frames. The results are averaged by three trials.}
	\label{all-episodes-score-mujoco}
\end{table*}

\end{document}